\documentclass{article}

\usepackage{microtype}
\usepackage{graphicx}
\usepackage{subcaption}
\usepackage{booktabs} 

\usepackage{hyperref}

\usepackage[preprint]{icml2026}

\usepackage{amsmath}
\usepackage{amssymb}
\usepackage{mathtools}
\usepackage{amsthm}
\usepackage{tcolorbox}

\usepackage[capitalize,noabbrev]{cleveref}

\theoremstyle{plain}
\newtheorem{theorem}{Theorem}[section]
\newtheorem{proposition}[theorem]{Proposition}

\theoremstyle{definition}
\newtheorem{definition}[theorem]{Definition}
\newtheorem{assumption}[theorem]{Assumption}

\theoremstyle{remark}

\usepackage[textsize=tiny]{todonotes}

\icmltitlerunning{When Do Hallucinations Arise? A Graph Perspective on the Evolution of Path Reuse and Path Compression}

\usepackage{tcolorbox}

\newtcolorbox[
  auto counter,
  number within=section
]{takeaway}{
  colback=orange!2,
  colframe=orange!35!gray,
  boxrule=0.4pt,
  arc=1pt,
  left=3pt,
  right=3pt,
  top=3pt,
  bottom=3pt,
  title=Takeaway~\thetcbcounter,
  fonttitle=\bfseries\small
}

\usepackage{amsmath,amsfonts,bm}
\usepackage[version=4]{mhchem}
\usepackage{adjustbox}
\usepackage{caption}
\usepackage{subcaption}

\def\eqref#1{equation~\ref{#1}}

\def\1{\bm{1}}

\DeclareMathAlphabet{\mathsfit}{\encodingdefault}{\sfdefault}{m}{sl}
\SetMathAlphabet{\mathsfit}{bold}{\encodingdefault}{\sfdefault}{bx}{n}

\begin{document}

\twocolumn[
  \icmltitle{When Do Hallucinations Arise? A Graph Perspective on the\\ Evolution of Path Reuse and Path Compression }



  \icmlsetsymbol{equal}{*}

  \begin{icmlauthorlist}
    \icmlauthor{Xinnan Dai}{1}
    \icmlauthor{Kai Yang}{2}
    \icmlauthor{Cheng Luo}{3}
    \icmlauthor{Shenglai Zeng}{1}
    \icmlauthor{Kai Guo}{1}
    \icmlauthor{Jiliang Tang}{1}
  \end{icmlauthorlist}

  \icmlaffiliation{1}{Michigan State University}
  \icmlaffiliation{2}{University of Michigan}
  \icmlaffiliation{3}{Independent Researcher}

  \icmlcorrespondingauthor{Kai Guo}{guokai1@msu.edu}

  \icmlkeywords{Machine Learning, ICML}

  \vskip 0.3in
]



\printAffiliationsAndNotice{}  

\begin{abstract}
Reasoning hallucinations in large language models (LLMs) often appear as fluent yet unsupported conclusions that violate either the given context or underlying factual knowledge. Although such failures are widely observed, the mechanisms by which decoder-only Transformers produce them remain poorly understood. We model next-token prediction as a graph search process over an underlying graph, where entities correspond to nodes and learned transitions form edges. From this perspective, contextual reasoning is a constrained search over a sampled subgraph (intrinsic reasoning), while context-free queries rely on memorized structures in the underlying graph (extrinsic reasoning). We show that reasoning hallucinations arise from two fundamental mechanisms: \textbf{Path Reuse}, where memorized knowledge overrides contextual constraints during early training, and \textbf{Path Compression}, where frequently traversed multi-step paths collapse into shortcut edges in later training. Together, these mechanisms provide a unified explanation for reasoning hallucinations in LLMs and connected to well-known behaviors observed in downstream applications.

\end{abstract}

\section{Introduction}

\begin{figure}
    \centering
    \includegraphics[width=\linewidth]{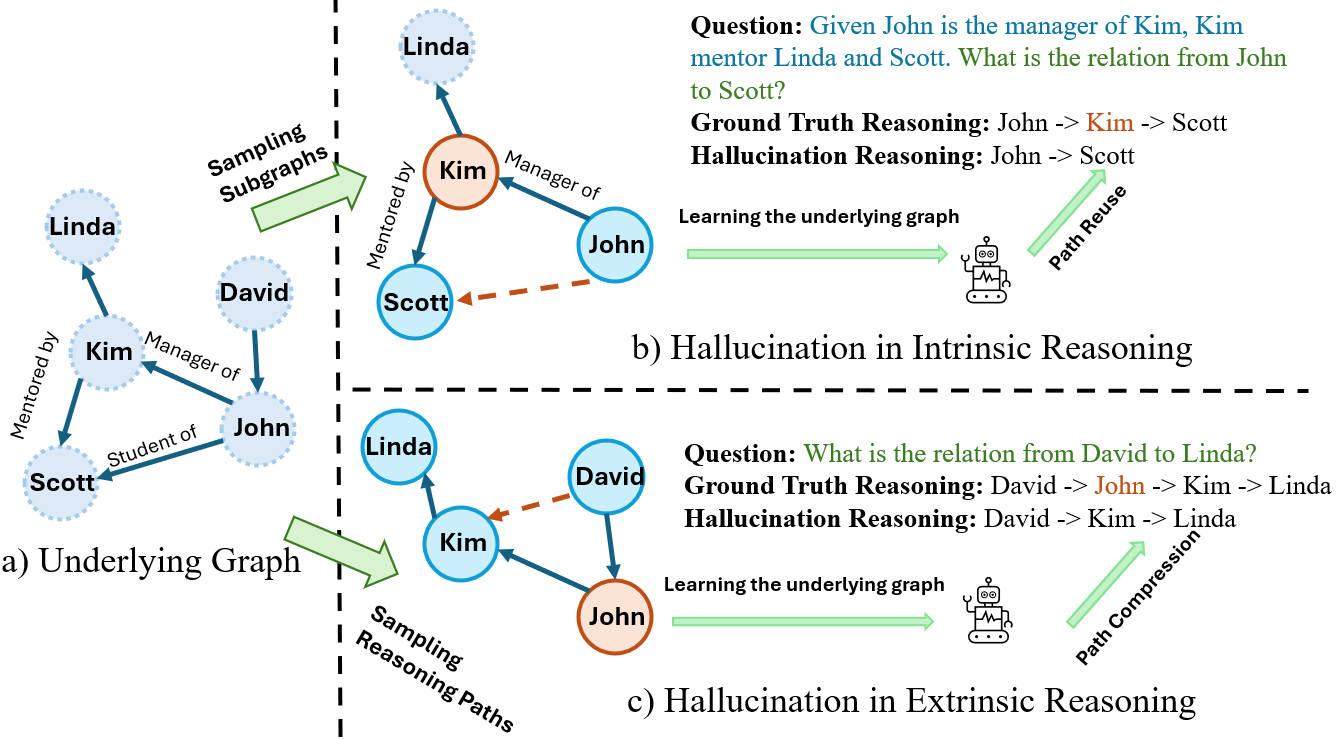}
    \caption{Reasoning Hallucinations from Underlying Graph Structures. (a) Building up the underlying graph for the implicit knowledge structure (b) In intrinsic reasoning, the model reuses common paths and hallucinates a direct relation (John → Scott) instead of the correct reasoning provided in context (John → Kim → Scott).
(c) In extrinsic reasoning, long reasoning chains (David → John → Kim → Scott) are compressed into shorter paths (David → Kim → Scott), causing hallucinated relations via path compression.}
    \label{fig:intro}
\end{figure}

Large language models (LLMs) have demonstrated remarkable capabilities in reasoning, planning, and decision-making across a wide range of tasks~\cite{naveed2025comprehensive,chang2024survey,yao2024survey}. However, despite their empirical success, LLMs are known to suffer from hallucinations, where they generate fluent but incorrect or unsupported content~\cite{huang2025survey,sriramanan2024llm,rawte2023troubling}. 
To be more specific, hallucinations can be categorized as input-conflict or fact-conflict~\cite{ji2023survey}. From an intrinsic reasoning perspective, these occur when the model’s predictions do not align with the given context. From an extrinsic reasoning perspective, they arise when the outputs are inconsistent with real-world knowledge~\cite{zhang2025siren}.
Understanding why hallucinations arise in such settings, rather than merely how often they occur, remains a central challenge for trustworthy and reliable LLMs.

Existing studies primarily analyze hallucinations from the perspectives of training data distributions and model outputs, attributing hallucinations to mismatches between the model’s reasoning trajectories and either the pretraining data or the provided context~\cite{bang2025hallulens,liu2026agenthallu,kalai2025language}. While these analyses capture important surface-level phenomena, they lack a unified explanatory framework that connects hallucinations to the internal reasoning mechanism of decoder-only Transformers. In particular, as prior work~\cite{orgadllms,zhu2024pollmgraph} focuses on models at a fixed checkpoint, it remains unclear when and why hallucinations emerge from the next-token prediction formulation that underlies decoder-only architectures.

To bridge this gap, following prior work that examines the capabilities of decoder-only Transformers~\cite{wang2025benefits,NEURIPS2024_ALPINE}, we model next-token prediction as a graph search problem within a constrained domain. Motivated by the fact that knowledge is expressed through language sequences~\cite{hogan2021knowledge}, we assume that such knowledge is organized over an underlying graph, as illustrated in~\Cref{fig:intro}a. In this graph, nodes correspond to entities (illustrated here using person names), while edges represent relations between entities, which are assumed to be invariant to query formulation. Under this view, a context containing relational hints can be regarded as sampling a subgraph from the underlying graph, whereas a direct query without context requires the model to reason along an internal path encoded in its parameters. These two settings naturally correspond to intrinsic reasoning and extrinsic reasoning, respectively.

We then characterize hallucinations in decoder-only Transformers under the underlying graph formulation. As illustrated in~\Cref{fig:intro}b, when the context specifies the relations John → Kim → Scott, the model is expected to predict relations that are consistent with this contextual subgraph. However, a hallucination occurs when the model directly predicts John → Scott, which is not supported by the given context. This failure reflects a lack of faithfulness to the provided context, corresponding to intrinsic reasoning hallucination. Meanwhile, as shown in~\Cref{fig:intro}c, even when the underlying knowledge structure follows David → John → Kim, the model may bypass John and directly predict David → Kim. This behavior constitutes a hallucination with respect to the underlying knowledge graph, revealing a failure in extrinsic reasoning. Together, these two cases highlight how hallucinations arise from deviations in graph-based reasoning paths, either by violating contextual constraints or by compressing multi-step transitions in the underlying knowledge structure.

Building on this perspective, we formalize the reasoning search space induced by underlying graphs to model both intrinsic and extrinsic reasoning in~\Cref{sec:underlying_graph}. We then identify two core mechanisms underlying reasoning hallucinations in decoder-only Transformers: \textbf{Path Reuse} (\Cref{sec:path_reuse}) and \textbf{Path Compression} (\Cref{sec:path_compression}).  Path Reuse arises in early training, when the model learns the underlying graph structure before learning to use contextual constraints. As a result, it reuses memorized reasoning paths, ignoring context-specific information. Path Compression emerges in later training. Oversampling frequent paths biases the learned graph toward high-degree nodes, suppressing low-resource nodes and collapsing multi-step reasoning into shortcuts. We show that both behaviors naturally emerge during pretraining, driven by structural biases in next-token prediction formulation.

We further discuss the implications of these hallucinations for downstream LLM applications in~\Cref{sec:application_impact_discussion}. From the perspective of the pretraining stage, we analyze how hallucinations naturally emerge when models learn knowledge from language sequences, where structural biases in next-token prediction shape the underlying reasoning graph. From the post-training stage, we examine how different fine-tuning strategies influence the recovery from hallucinations, and why certain compressed or missing reasoning paths are difficult to restore once overwritten. Finally, from the perspective of understanding large language model capabilities, we connect hallucinations to well-known LLM behaviors, such as the reversal curse and reasoning distribution bias, and reinterpret them through the lens of reasoning on the underlying graph.

In conclusion, our contributions are summarized:
\begin{enumerate}
    \item We develop a unified graph-based perspective to explain the search behavior of Transformers, analyzing how intrinsic and extrinsic reasoning emerge under different underlying graph properties.
    \item We further provide an explanation of hallucinations in decoder-only Transformers, summarizing them into two fundamental behaviors: path reuse and path compression. We study when these hallucinations become pronounced, particularly during the pretraining stage.
    \item We analyze how reasoning hallucinations arise during pretraining and how post-training fine-tuning affects their recovery. We further connect these hallucinations to well-known LLM behaviors, through a unified graph-based reasoning view.
\end{enumerate}

\section{Related work}
\paragraph{Synthetic Graphs for LLMs}Graphs have been widely adopted as a principled abstraction for modeling step-by-step reasoning processes in language models~\cite{prystawski2023think,khona2024towards}.
Building on this perspective, synthetic graph-based search problems are commonly used as controlled testbeds to probe the reasoning and planning capabilities of large language models.
Empirical studies show that autoregressive Transformers can partially capture local graph structure from the provided context and exhibit search-like behaviors. However, their performance degrades sharply as graph size or reasoning depth increases, and simply scaling model size or training data does not resolve these failures~\cite{NEURIPS2024_ALPINE,saparovtransformers}.
\citet{bachmann2024pitfalls} further identifies inherent limitations of next-token prediction in autoregressive Transformers, showing that models are prone to shortcut solutions that utilize prefix tokens rather than performing faithful multi-step reasoning.
From a planning perspective, reinforcement learning can improve performance along selected trajectories, but it often biases models toward a narrow subset of high-reward paths, rather than recovering the full underlying reasoning structure~\cite{wang2025benefits}.

Our work provides a unified framework for understanding hallucinations under training dynamics, covering both context-conditioned and context-free reasoning settings. Moreover, we establish a principled connection between reasoning on synthetic graphs and knowledge acquisition from real-world language sequences.

\paragraph{Training Dynamics of LLMs} Understanding how reasoning capabilities emerge and evolve during training is crucial to interpreting model behavior and hallucinations. Recent studies on training dynamics have shown that phase transitions \cite{wei2022emergent} occur when specific abilities emerge during training. Research on grokking  \cite{power2022grokking}  shows that models can exhibit delayed generalization by memorizing training data before discovering the underlying patterns. Recent work has investigated the evolution of internal representations during training models \cite{nanda2023progress}, revealing that early training stages may prioritize different computational strategies than later stages. This training dynamics results in a divergence between optimization and learning \cite{mousavi2026does}, characterized by a reduction in training loss that decreases without improving the model's capabilities.

Our work advances this understanding by identifying two unique training paradigms of underfitting (Path Reuse) and overfitting (Path Compression) which correspond to hallucinations. Using a purely synthetic graph setting, we further extend this explanation to knowledge learning from language sequences, showing how hallucinations can arise even as training loss continues to decrease.
\section{Underlying Graph for the Searching Space}
\label{sec:underlying_graph}
Since LLMs perform reasoning through multiple sequential steps, we model the reasoning search space as an underlying directed graph, where nodes correspond to intermediate reasoning states and edges represent valid reasoning transitions between them.

\begin{definition}[Underlying Reasoning Graph]
Let $G = (V, E)$ be a directed graph. Each node $v \in V$ denotes an atomic reasoning state (e.g., an entity, a symbolic variable, or an intermediate concept), and each directed edge $(u, v) \in E$ represents a \emph{valid reasoning transition} from state $u$ to state $v$, as permitted by the underlying task semantics or logical constraints.
\end{definition}

\begin{definition}[Valid Reasoning Path]
Given an underlying reasoning graph $G = (V, E)$, a \emph{reasoning path} from a source state $s \in V$ to a target state $t \in V$ is a sequence of nodes
$
p = (v_0, v_1, \dots, v_L),
$
where $v_0 = s$, $v_L = t$, and $(v_{i-1}, v_i) \in E$ for all $i = 1, \dots, L$.  
Each path corresponds to a valid multi-step reasoning process that incrementally transforms the source state into the target state.
\end{definition}

Under this formulation, we obtain several advantages.
(1) Intrinsic and extrinsic reasoning can be naturally distinguished by whether the reasoning is supported by an explicit subgraph in~\Cref{sub_sec:intrinsic_extrinsic_define}.
(2) Both reasoning settings allow us to enumerate the maximal set of valid reasoning paths. As a result, our experiments can simulate an idealized scenario in which all possible solution paths are available in~\Cref{sub_sec:max_sets}.

\subsection{Intrinsic and Extrinsic Reasoning}
\label{sub_sec:intrinsic_extrinsic_define}
\paragraph{Intrinsic Reasoning.}
Intrinsic reasoning requires LLMs to perform reasoning that is strictly aligned with the given conditions.
In our setting, the context is specified by a subgraph \( G_I \), which is represented as an edge list
and denoted by \( \mathsf{EL}(G_I) \)~\cite{cohenspectral,daisequence}.
Given a source node \( s \) and a target node \( t \), a training sample is constructed as
$\mathsf{PATH}_{\mathrm{I}}(s, t, G_I)
= (\mathsf{EL}(G_I), \texttt{S}, s, \texttt{T}, t, \texttt{PATH}, s, a, b, t),$
where the corresponding query is denoted by \( q(G_I, s, t) \).
From the subgraph perspective, the context provides all of the relevant edges for the path reasoning.

\paragraph{Extrinsic Reasoning.}
In contrast, extrinsic reasoning relies solely on the model’s memorized knowledge rather than the provided context.
The training example is given by
$
\mathsf{PATH}_{\mathrm{C}}(s, t)
= (\texttt{S}, s, \texttt{T}, t, \texttt{PATH}, s, a, b, t),
$
where the query is specified only by the source and target nodes as \( q(s, t) \).
In this setting, prediction is therefore driven primarily by knowledge encoded in the model parameters.

\subsection{Maximal Settings for Experiments}
\label{sub_sec:max_sets}
\paragraph{Maximum Number of Subgraphs} 

The maximum number of training samples for intrinsic reasoning is determined by the number of distinct subgraphs
that can be obtained from the underlying graph $G=(V,E)$.
Formally, a subgraph $G_I$ is specified by a node subset $U\subseteq V$ together with an edge subset
$F\subseteq E[U]$, where $E[U]$ denotes the set of edges in $G$ whose endpoints are both contained in $U$,
Let $|V|=N$ and $|E|=M$. Since $0\le |E[U]|\le M$ for any $U\subseteq V$, we have
$
\#\mathcal{G}_{I}(G)
=
\sum_{U \subseteq V} 2^{\lvert E[U]\rvert}
$
Notably, the number of subgraphs can grow explosively as graph density increases—for instance, in a fully connected directed graph, this quantity can be $2^{N^2-N}$.
The upper bound represents the maximal combinatorial space when both nodes and edges are freely selected. Consequently, the sample number of intrinsic-reasoning dataset grows jointly with the number of nodes and the edge density of the underlying graph.
Each subgraph $G_I$ further induces a family of intrinsic-reasoning queries $q(G_I,s,t)$,
and thus the maximal number of intrinsic-reasoning samples is upper-bounded by
$\#\mathcal{G}_{\mathrm{sub}}(G)$.  Meanwhile, the selected subgraphs must preserve multiple valid paths so as to maintain the full space of feasible query–answer pairs.
\paragraph{Maximal Subgraph Space.}
The maximal path space denotes the total number of feasible shortest paths between a source–target pair, which determines the upper bound of extrinsic-reasoning samples.
In the extrinsic reasoning setting, the model is queried only with the source and target nodes $(s,t)$ and must therefore rely on memorized transitions rather than explicit contextual constraints. Since source--target pairs are drawn from a finite set of size $|V|^2$, we analyze the number of possible reasoning paths by fixing a given pair $(s,t)$. Let $d = \mathrm{dist}(s,t)$ denote the shortest-path distance between $s$ and $t$. We define the shortest-path layers as
$
V_i = \{ v \mid \mathrm{dist}(s,v)=i,\ \mathrm{dist}(v,t)=d-i \},
\quad i=1,\ldots,d-1,
$
with boundary layers $V_0=\{s\}$ and $V_d=\{t\}$.
Any shortest path from $s$ to $t$ selects exactly one node from each intermediate layer, yielding a Cartesian-product upper bound
$
\#\mathrm{SP}(s,t) \le \prod_{i=1}^{d-1} |V_i|.
$
However, not all such combinations correspond to valid paths, as reachable path depends on the availability of edges between adjacent layers. Let $\rho_i \in [0,1]$ denote the edge density between layers $V_i$ and $V_{i+1}$. Observing that
$
\prod_{i=1}^{d-1} |V_i|
= \prod_{i=0}^{d-1} \sqrt{|V_i|\,|V_{i+1}|},
$
the number of realizable shortest paths is therefore bounded by
$
\#\mathrm{SP}(s,t)
\le
\prod_{i=0}^{d-1} \sqrt{\rho_i\,|V_i|\,|V_{i+1}|}.
$ Note that $\rho_i$ can vary substantially across layers depending on the underlying graph structure.
In Erd\H{o}s--R\'enyi graphs~\cite{erdds1959random}, $\rho_i$ tends to concentrate around a global average value,
yielding relatively uniform connectivity along the path.
In contrast, stochastic block models(SBM graph~\cite{holland1983stochastic}) exhibit much greater variability in $\rho_i$ due to community structure:
layers aligned with intra-community regions have high density,
whereas inter-community transitions form low-density bottlenecks,
especially when paths traverse hub or boundary nodes.

Therefore, we explicitly control the proportion of samples used in our experiments to avoid underfitting caused by insufficient training data. We enumerate all subgraphs and shortest paths to construct datasets for intrinsic and extrinsic reasoning, respectively. Among all samples, 90\% are used for training and the remaining 10\% are reserved for evaluation. To prevent information leakage between training and testing, we further remove from the test set any shortest path whose node sequence is a subsequence of a training path. During training, we use a train ratio to regulate the fraction of available training samples, allowing us to systematically study the effect of data scale on model performance.

\section{Hallucination in Intrinsic Reasoning}
\label{sec:path_reuse}
Reasoning over paths with given contextual constraints has been widely studied as a way to probe the reasoning capabilities of Transformer models~\cite{NEURIPS2024_ALPINE,saparovtransformers,sanford2024understanding,cohenspectral}. 
Following this, we focus on analyzing what occurs during the training phase.

\paragraph{Experiment Settings}

First, we construct the underlying graph using an Erd\H{o}s--R\'enyi (ER) graph ~\cite{erdds1959random} with $N=10$ nodes and edge probability $p=0.4$. From this underlying graph, we enumerate $225{,}808$ subgraphs, yielding a total of $3{,}982{,}697$ training samples.
Since all subgraphs are derived from the same underlying graph, we first evaluate whether the model can utilize edges that exist in the underlying graph, regardless of the imposed conditions. We denote this metric as $\mathrm{ACC}_{\mathrm{Exist}}$, which measures whether the predicted paths are valid with respect to the underlying graph structure. To further assess the model’s understanding of the global graph structure, we define $\mathrm{ACC}_{\mathrm{Global}}$, which evaluates generalization over all reachable source--target node pairs. In this setting, each query is specified only by the source and target nodes as
$
q = (\texttt{S}, s, \texttt{T}, t, \texttt{PATH}) .
$
Finally, to evaluate contextual understanding, we use $\mathrm{ACC}_{\mathrm{Local}}$, which measures whether the predicted paths strictly satisfy the constraints imposed by the given subgraph context.
Formal definitions are provided in~\Cref{app_sec:metrics}, and the evolution of these metrics is shown in~\Cref{fig:acc_condition}a).
We train from scratch on Llama-2 architecture, the full experiment setting is listed in the Appendix~\Cref{app_sec:experiments_setting}.
\begin{figure}
    \centering
    \includegraphics[width=\linewidth]{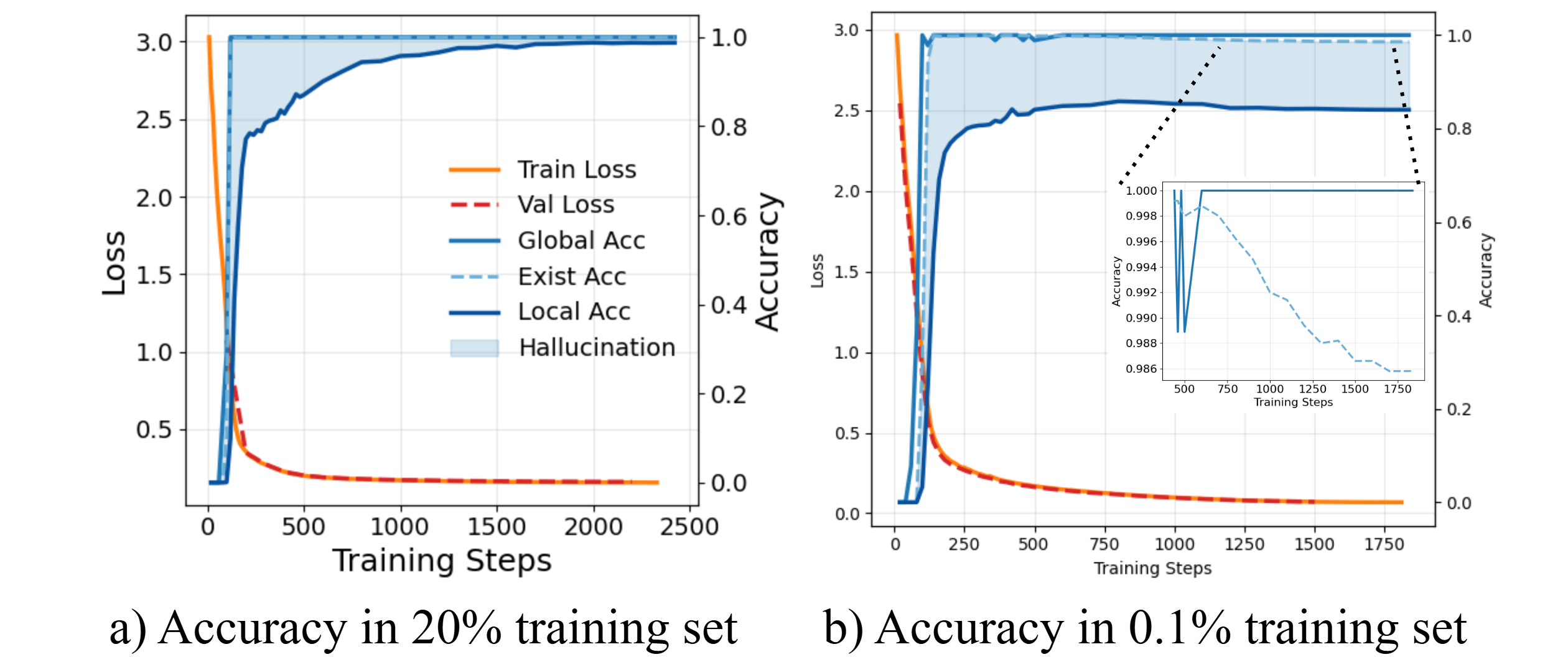}
    \caption{Training evolution in conditional reasoning.
(a) Accuracy trajectories under a 20\% training set. The persistent gap between global and local accuracy indicates the emergence of hallucinations.
(b) Accuracy trajectories under a 0.1\% training set. Hallucinations persist and cannot be resolved by simply increasing the number of training steps. The zoomed-out view further shows that, at later stages of training, the Exist Acc gradually deteriorates and Global Acc is not stable.}
    \label{fig:acc_condition}
    
\end{figure}
\paragraph{Observation from Training Dynamics}

The results suggest that the Transformer eventually learns to perform path searching over the sampled subgraphs, consistent with prior studies~\cite{NEURIPS2024_ALPINE}.
However, we observe that the model acquires a global view of the reasoning space before mastering local, condition-faithful reasoning.
In the early stages of training, the Transformer often produces predictions that violate the given conditions, yet the predicted edges already exist in the underlying graph. Later, the Transformers start to learn reasoning paths with given conditions. This indicates that the Transformer first develops a global structural understanding of the graph,
and only later learns to perform localized reasoning that requires satisfying specific conditions. During this underfitting in the early stage, the model frequently reuses previously learned reasoning paths across different conditional queries, regardless of whether those paths are consistent with the given conditions. We refer to this phenomenon as \textbf{Path Reuse}.

We also observe that such hallucinations persist when training relies on only a limited number of samples, as shown in~\Cref{fig:acc_condition}b). Although both the training and validation losses continue to decrease, performance on the test set no longer improves, while the Global and Exist accuracies remain high. As discussed in~\Cref{sub_sec:max_sets}, when the underlying graph is dense and contains hundreds of nodes with near-fully connectivity, the number of subgraphs grows rapidly, making it substantially harder for Transformers to avoid hallucinations if only sampling a small number of training data. This behavior reflects the difficulty of Transformers in performing systematic search~\cite{saparovtransformers}. Concretely, we find that Exist accuracy gradually declines as training progresses, while Global accuracy becomes unstable in later stages, indicating that the Transformer begins to overfit to a subset of specific paths. We analyze this overfitting stage in detail in~\Cref{sec:path_compression}.

\begin{takeaway}
    Path reuse hallucinations in intrinsic reasoning arise when Transformer underfitting, typically in early training or with limited data, where the model relies on edges from the underlying graph rather than the given context for reasoning.
\end{takeaway}
\section{Hallucination in Extrinsic Reasoning}
\label{sec:path_compression}

In~\Cref{sec:path_reuse}, we show that learning the underlying graph precedes the learning of local contextual reasoning. This indicates that the underlying graph establishes the foundational search capability of Transformers, suggesting that extrinsic reasoning constitutes their basic reasoning ability.

To further investigate how Transformers explore paths within the underlying graph, we design a set of experiments to analyze how graph structure is internalized during training~\Cref{sub_sec:path_compression} and to identify the emergence of path compression that leads to hallucinations in~\Cref{sub_sec:path_compression_mechanism}. 

\subsection{Path Compression in Extrinsic Reasoning}
\label{sub_sec:path_compression}
\paragraph{Experiment Setting}

To investigate how Transformers memorize underlying graph from the training corpus in sequence data, we train a 6-layer LLaMA-2--style architecture from scratch. Following the guidance of~\Cref{sub_sec:max_sets}, we construct the training set using the maximum amount of data available in the synthetic search space, ensuring full exposure to all possible path structures.

We generate the underlying graphs using the Stochastic Block Model (SBM), with $1{,}000$ nodes and varying numbers of communities, as well as different intra- and inter-community connection probabilities $p_{\text{in}}$ and $p_{\text{out}}$. For each underlying graph, we allocate $90\%$ of the paths in the search space for training and reserve the remaining $10\%$ for evaluation. Since the total number of training samples varies with graph properties, we measure training progress in terms of epoch steps, where each step corresponds to 7 full pass over the entire training set. Accuracy is evaluated under a strict criterion: a predicted path is considered correct only if all predicted edges exist in the underlying graph. 

\paragraph{Observation from Training Dynamics } 
The results are summarized in~\Cref{fig:path_compression}a). Each model was run three times; detailed results are provided in Appendix Table~\ref{tab:exp_details}. Accuracy trajectories indicate that Transformers first achieve near-perfect accuracy across a wide range of graph configurations, followed by a steady degradation as training continues. Although different graph structures induce distinct data distributions that affect overall performance levels, this degradation consistently appears across settings, differing primarily in magnitude. For instance, when the graph contains fewer communities (e.g., $10$) with a high intra-community probability ($p_{\text{in}} = 0.6$), accuracy decreases slightly from $1.0$ to $0.98$. In contrast, as the number of communities increases (e.g., $50$), the model becomes less able to maintain high performance, with accuracy dropping from $0.97$ to $0.85$.
A detailed analysis of how graph properties influence model performance is provided in~\Cref{app_sec:graph_property}.
\begin{figure}
    \centering
    \includegraphics[width=\linewidth]{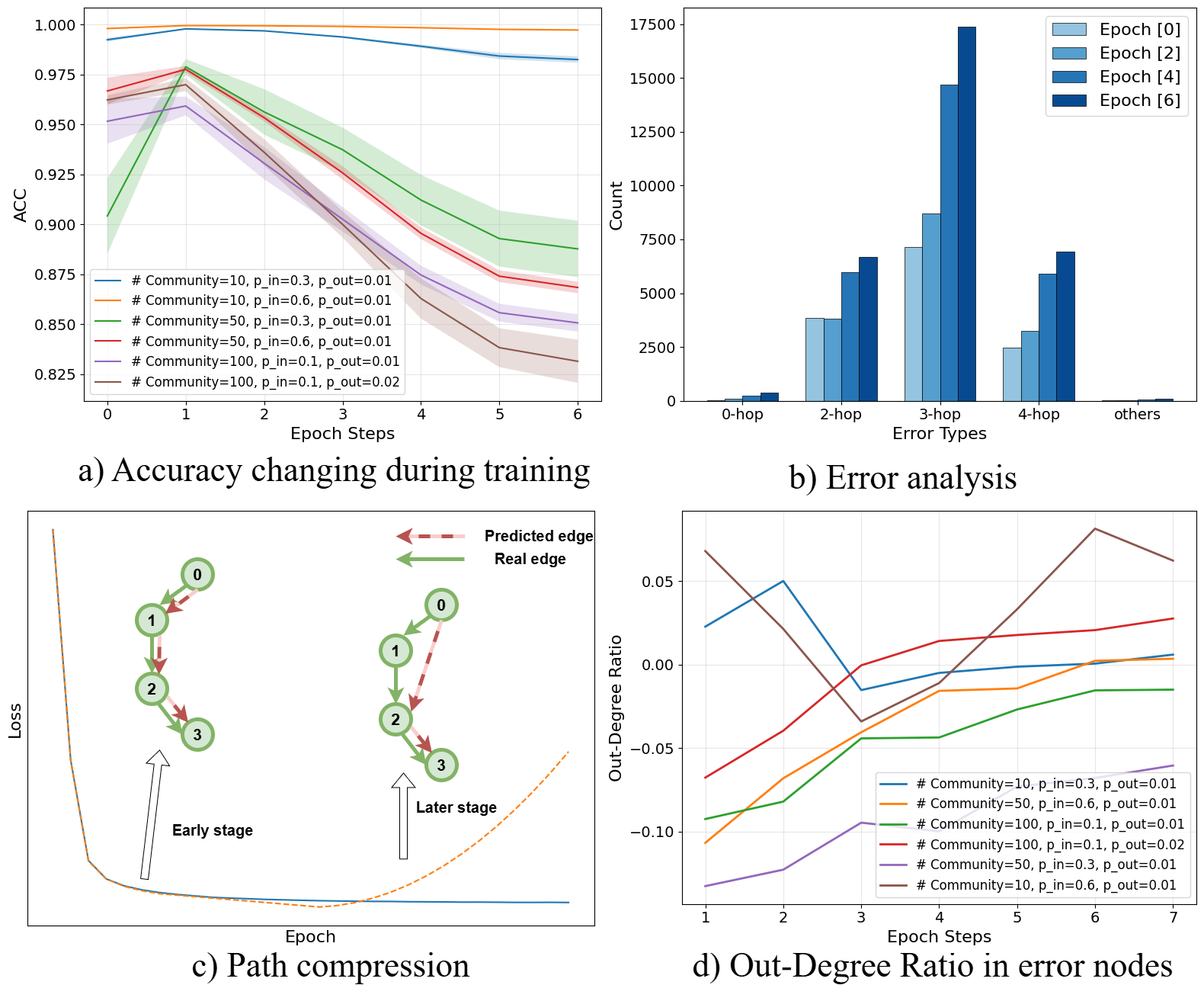}
    \caption{Path Compression. (a) Accuracy degradation during training across graph settings. (b) Errors are biased toward k-hop neighbors. (c) Path-compression hallucinations via implicit k-hop edge creation. (d) Accumulation of high–out-degree nodes in erroneous predictions.}
    \label{fig:path_compression}
\end{figure}

Further, we analyze the behavior of the model when its accuracy begins to decline by examining the properties of incorrectly predicted nodes across training epochs. Using a graph with 1000 communities and connection probabilities $p_{\text{in}} = 0.1$ and $p_{\text{out}} = 0.01$ as a representative example, we analyze error patterns throughout training, as shown in~\Cref{fig:path_compression}b. Specifically, we categorize errors by the hop distance between the predicted node and the ground-truth target. A 0-hop error corresponds to repeating the previous node, while the “others” category includes predictions that cannot reach the target node or whose path length exceeds five hops. Among all error types, predicting 3-hop neighbors constitutes the most frequent failure mode. Moreover, the proportion of this error increases as training progresses. These results suggest that transformers increasingly favor predicting $k$-hop neighbors during training. As suggested in~\Cref{fig:path_compression}c, in the overfitting stage, the transformers will easily take the $k$-hop nodes, choosing the higher out-degree nodes to be the next token, which is defined as \textbf{Path Compression}.

\begin{takeaway}
Path compression hallucinations in extrinsic reasoning arise when Transformer overfitting, typically in later training stages, where the model compresses multi-hop paths into k-hop neighbor connections.
\end{takeaway}

\subsection{Mechanism Exploring}
\label{sub_sec:path_compression_mechanism}
\begin{figure}
    \centering
    \includegraphics[width=\linewidth]{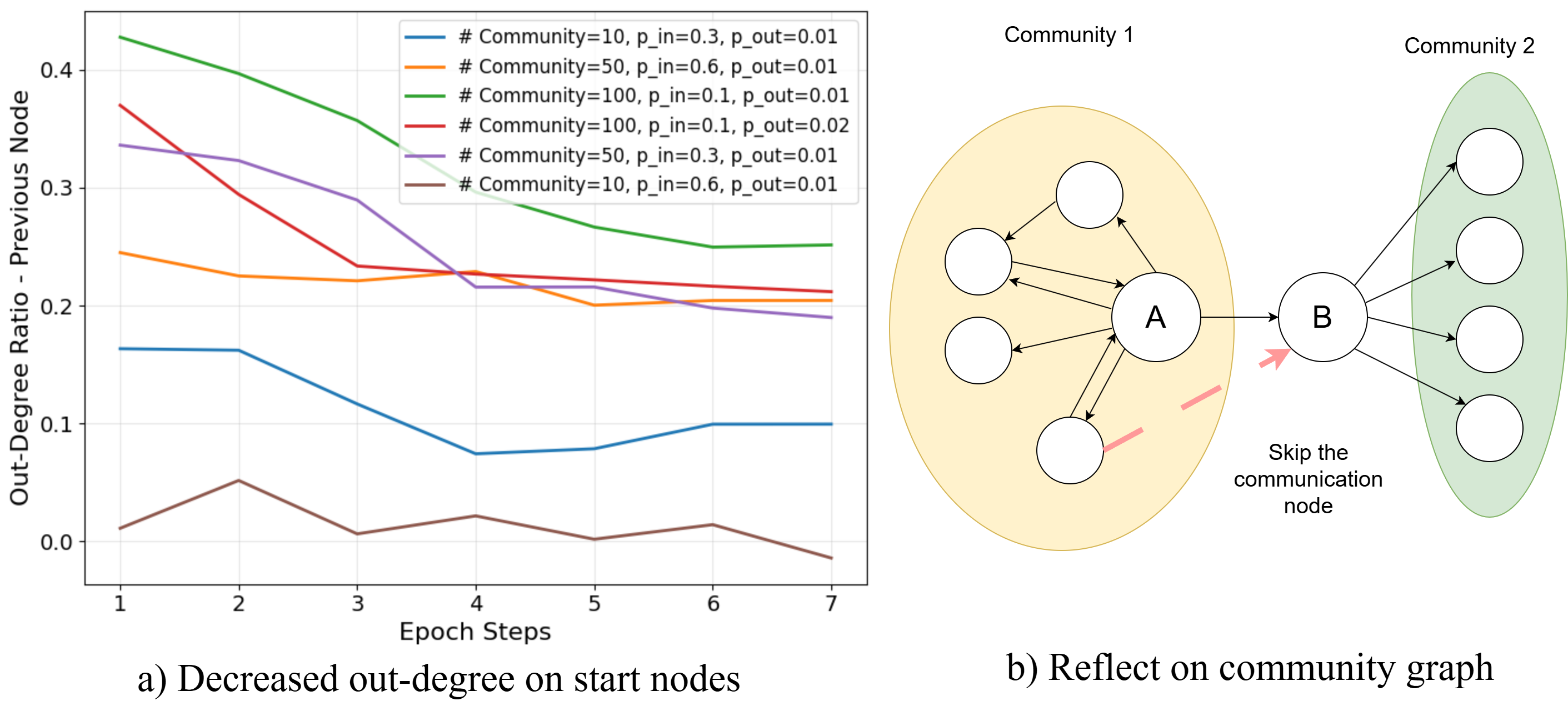}
    \caption{Error Understanding. (a) Prediction errors decrease as the out-degree of the start nodes increases.
(b) Path compression reveals a tendency to bypass bridge nodes, directly jumping across communities.}
    \label{fig:error-ana}
\end{figure}

\paragraph{Factor Analysis}

Next, we investigate why Transformers compress paths by effectively introducing new edges. Specifically, we analyze this behavior through node out-degree statistics. We first apply z-score normalization to the out-degrees of all nodes in the underlying graph. We then collect the nodes involved in non-existent edges predicted by the model and compute the mean normalized out-degree of these nodes at each training epoch. The resulting out-degree ratios for the predicted nodes and the preceding nodes are visualized in~\Cref{fig:path_compression}d) and~\Cref{fig:error-ana}a), respectively.

The normalized out-degree of predicted nodes increases over training, while that of the previous nodes decreases. For example, in a graph with 1,000 communities and connection probabilities $p_{\text{in}}=0.1$ and $p_{\text{out}}=0.02$ (red curve), accuracy drops from 0.95 to 0.80 as training progresses. During the same period, the mean normalized out-degree of incorrectly predicted nodes rises from $-0.06$ to $0.025$, whereas that of the preceding nodes decreases from $0.35$ to $0.25$. These results indicate that path compression predominantly occurs when reasoning transitions skip low–out-degree nodes and directly jump to high–out-degree nodes, effectively bypassing community-level bridge nodes, as illustrated in~\Cref{fig:error-ana}b).

Consequently, underlying graphs with many communities are more susceptible to path compression. As shown in~\Cref{sub_sec:path_compression} and~\Cref{fig:path_compression}a), graphs with fewer communities exhibit significantly less path compression and maintain higher accuracy; for instance, graphs with only 10 communities are much more robust to this effect. We note that community structure is jointly determined by $p_{\text{in}}$ and $p_{\text{out}}$, and these factors also influence the severity of path compression. Additional results analyzing the impact of graph properties on path compression are provided in the Appendix \ref{app_sec:graph_property}.

\paragraph{Layer and Architecture Impacts}

We first investigate the effect of model depth, which is regard as a critical factor in LLMs reasoning~\cite{chen2024theoretical}. To quantify the extent to which reasoning paths remain uncompressed, we introduce the Uncompressed Ratio $R$, defined as
$
R = \frac{\text{predicted path length}}{\text{ground-truth path length}} .
$
A higher value of \(R\) indicates that fewer reasoning paths are compressed by the model. Our results in~\Cref{fig:arch-layer}(a) show that deeper layers do not help decoder-only Transformers recover from path compression. In contrast, shallower layers mitigate compression effects that would otherwise accumulate as training converges.

\begin{figure}
    \centering
    \includegraphics[width=\linewidth]{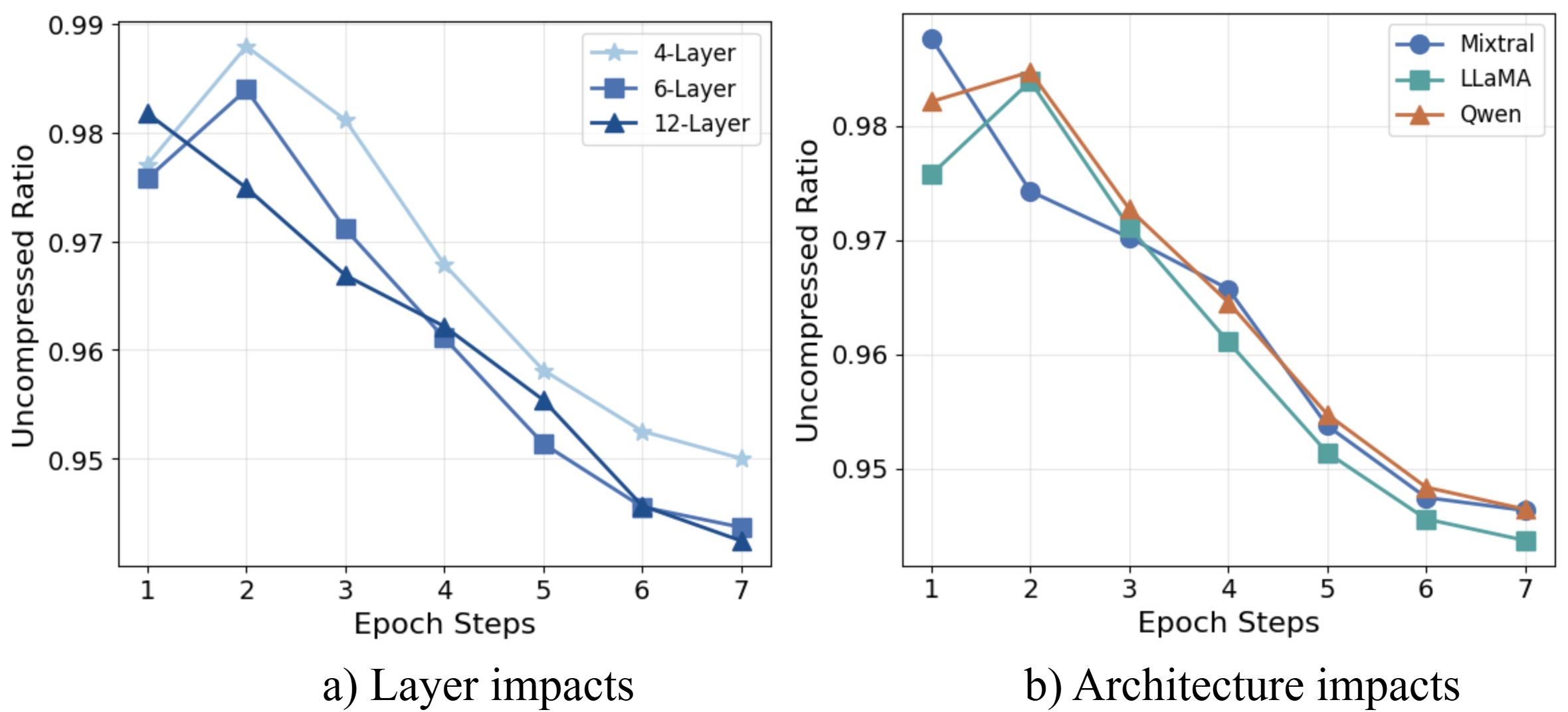}
    \caption{Uncompressed ratio under different model settings.
(a) Increasing the number of layers does not prevent the decline of the uncompressed ratio.
(b) The uncompressed ratio consistently declines across different next-token prediction frameworks.}
    \label{fig:arch-layer}
\end{figure}

We further compare several model architectures, including LLaMA-2, Qwen-2 (which adopts a different positional embedding scheme from LLaMA), and Mixtral (which incorporates a mixture-of-experts design). While the overall trends are consistent across architectures, subtle differences emerge in their training dynamics. Specifically, Mixtral achieves its highest uncompressed ratio in the first training epoch, whereas LLaMA-2 and Qwen-2 peak in the second epoch. Nevertheless, all models exhibit a progressively increasing degree of path compression as training proceeds.

\paragraph{Path Compression Hypothesis}

We attribute the path-skipping phenomenon to models' sequence co-occurrence rather than strict topological adjacency. To learn path searching, the model must implicitly learn the underlying graph structure from the training sequences. However, because the model aggregates information across a context window, the model's learned transition probabilities do not strictly bound themselves to a 1-step adjacency matrix. Instead, they effectively approximate a convex combination of multi-step graph transitions. We hypothesize that skipping occurs when this learned multi-hop context conflates with the 1-step prediction space. Specifically, consider a subsequent node $u$ that is $k$ hops away from the current node $v$. Because $u$ frequently co-occurs with $v$ in the training paths, the model assigns a significant weight to this $k$-step relationship. If the aggregated multi-step probability mass for $u$ outweighs the sparse 1-hop transition probability to the true intermediate neighbor $m$, the statistical co-occurrence effectively overrides the discrete topological constraints. Consequently, the model hallucinates a shortcut directly to $u$—predicting a non-existent edge because its representation space correctly captured the multi-hop correlation but failed to respect the hard sparsity of the graph (see Appendix \ref{app:skip-explain} for an analytical model and proofs).

\begin{takeaway}
Path compression hallucinations occur more frequently across communities by jumping over bridge nodes. Their occurrence reflects a bias induced by the training data, rather than an effect of different Transformer architectures.
\end{takeaway}

\section{Application Impact and Discussion}
\label{sec:application_impact_discussion}
\begin{figure}[!t]
\centering
\resizebox{0.5\textwidth}{!}{%
  \begin{minipage}{\textwidth}
    \begin{tcolorbox}[title=Example, colback=white, colframe=black!50]
      \textbf{Path in underlying graph}: PATH 33 381 76 62 END\par
      \textbf{Build the Relations}: Kimberly King (33) collaborated with Marlene Beck (381); Marlene Beck friend of Scott Cox (76); Scott Cox supervised by Gary Murphy (62).
      
      \textbf{Generated story}: Kimberly King, a rising star, always valued Marlene Beck's sharp insights, their collaborations sparking innovation. Marlene, in turn, found a kindred spirit in Scott Cox, their friendship a bedrock of mutual support. Scott, navigating his career, often leaned on Gary Murphy, his patient supervisor. Gary, observing Scott's dedication, saw a reflection of his own early days. Their intertwined paths, fueled by collaboration, friendship, and guidance, painted a quiet portrait of professional and personal growth.
      
    \end{tcolorbox}
    
  \end{minipage}
}

\caption{Example of change a reasoning path from underlying graph to a story.}
\label{fig:real-world-example}
\end{figure}
We further examine how hallucinations modeled by underlying graphs inform our understanding of the mechanisms underlying LLM behavior. Specifically, we analyze how hallucinations emerge during language-sequence pretraining in~\Cref{sub_sec:real-world}, how fine-tuning methods mitigate them in~\Cref{sub_sec:finetuning-recovery}, and how these mechanisms relate to well-known phenomena in large language models in~\Cref{sub_sec:discussion}.

\subsection{The Effects of Path Compression in Pre-training Language Sequence}
\label{sub_sec:real-world}
To analyze the impact of path compression on language models in real-world settings, we convert synthetic graphs into natural-language stories. We construct a 500-node stochastic block model with 25 communities ($p_{in} = 0.1, p_{out} = 0.01$), where each node represents a person entity. We define 40 types of interpersonal relations (e.g., friends, peers, coworkers), which are assigned to edges in the underlying graph. For each sampled path, we instantiate the corresponding sequence of entity relations. Finally, we use Gemini-2.5-lite to generate stories from these relational paths, producing natural-language training samples. The input prompt is:``Giving the people relation path:[Relations]. Please write a 80 words story of them based on the path.''.~\Cref{fig:real-world-example} illustrates an example of this translation process.

\begin{figure}[t]
    \centering
    \includegraphics[width=\linewidth]{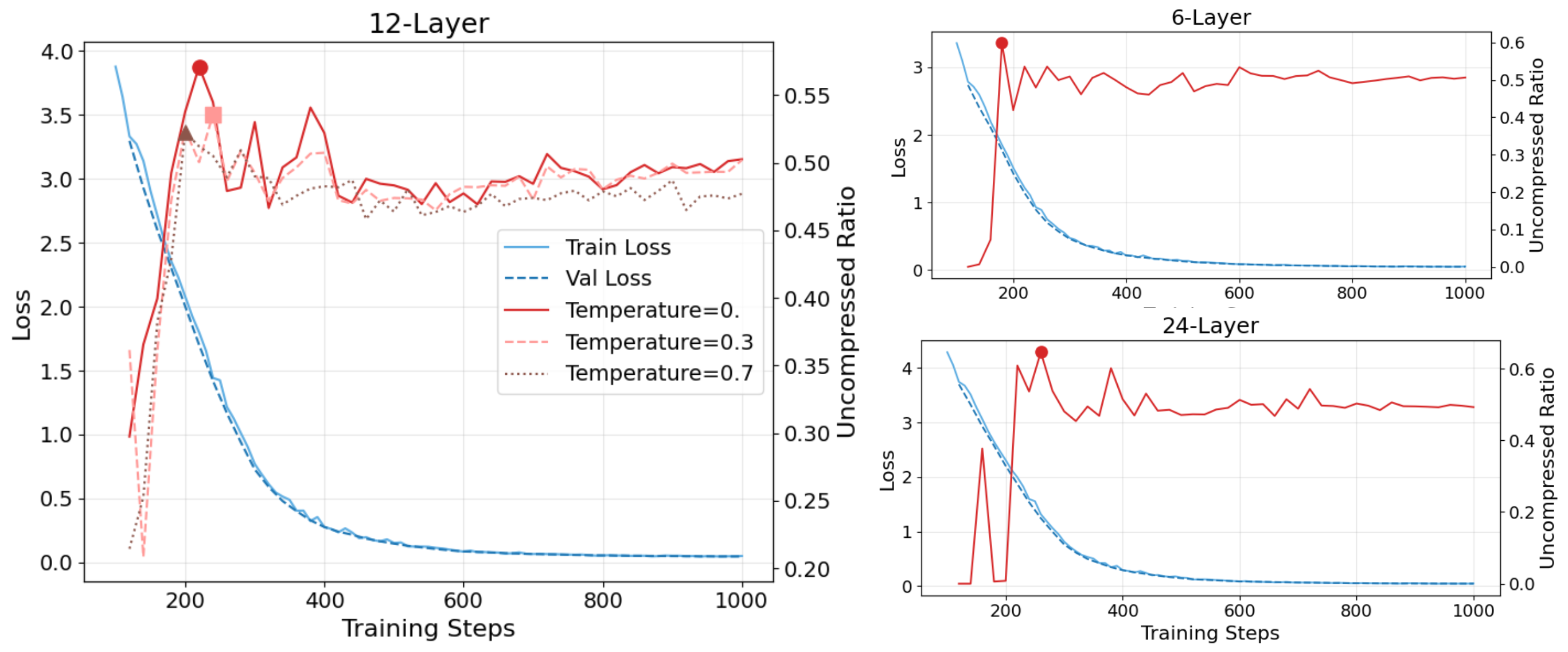}
    \caption{Training on language sequence. The highest uncompressed ratio is in the early stage.}
    \vspace{-5pt}
    \label{fig:real-world-results}
\end{figure}

We train Qwen3-style Transformers from scratch to study how they memorize underlying knowledge graphs from language sequences, evaluating models with depths ranging from 6 to 24 layers.
To evaluate the reliable memorized knowledge, we prompt the model with a person’s name as the starting entity and let it generate a related narrative. We assess whether the generated person names follow the correct order implied by the underlying relational paths. In addition, we evaluate generation behavior under different decoding temperatures (\(T = 0, 0.3, 0.7\)). The results are shown in~\Cref{fig:real-world-results}. We highlight the maximum uncompressed path ratio with special markers. 

The results show that the maximum uncompressed path ratio is achieved at an early stage of training, even when both training and validation losses have already reached relatively low values. After loss convergence, the uncompressed path ratio stabilizes but no longer reaches the earlier peak performance. This behavior is consistent across models with different numbers of layers. In real-world corpora, auxiliary tokens introduce additional noise and interference. Thus, while the overall loss decreases, path compression remains observable at the knowledge graph level. However, this loss reduction does not imply faithful knowledge understanding; instead, the Transformers exhibit path-compression hallucinations. Meanwhile, we include an analysis of real-world language data distributions in Appendix~\ref{app_sec:real-world}.

\subsection{Finetuning Recovery}
\label{sub_sec:finetuning-recovery}

\begin{figure}
    \centering
    \includegraphics[width=\linewidth]{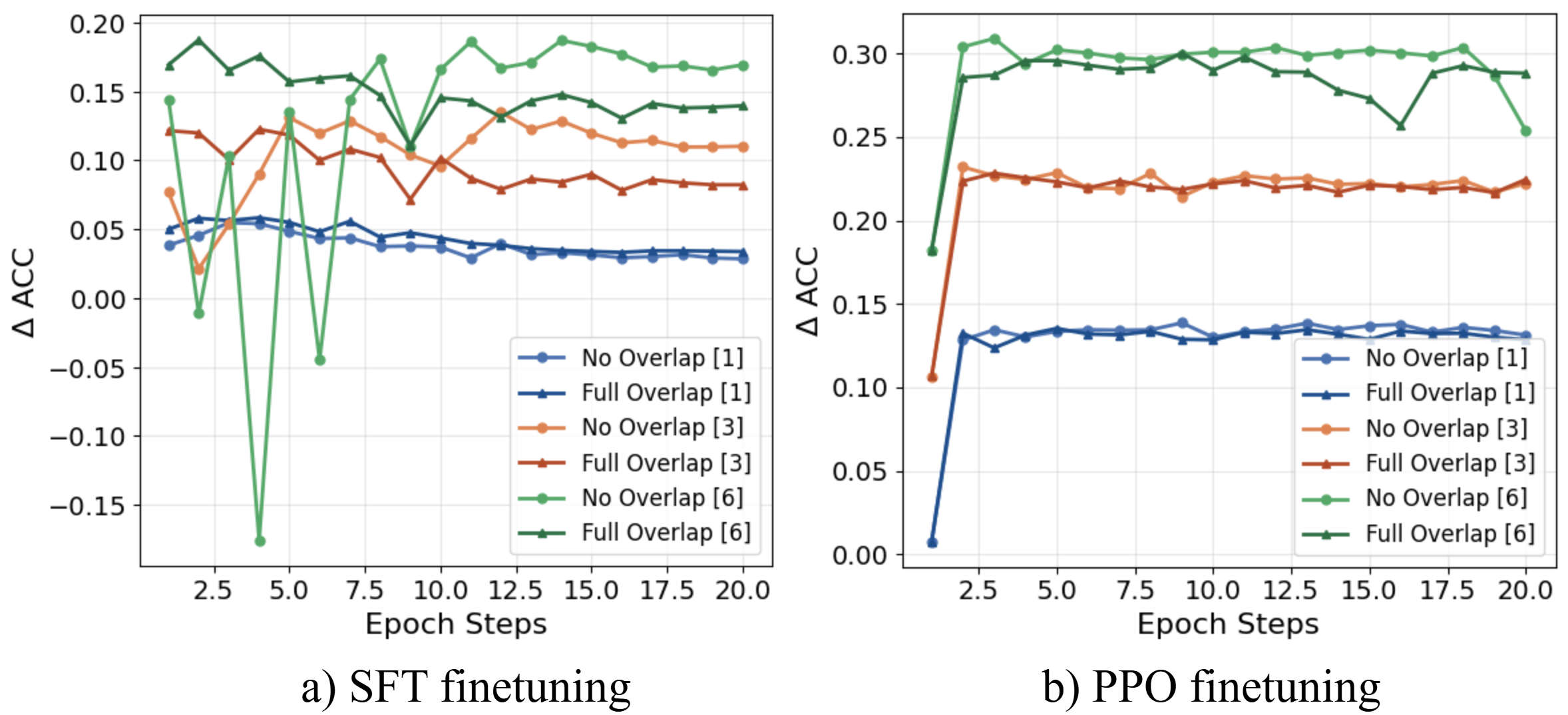}
    \caption{
    Accuracy improvement ($\Delta$ACC) during finetuning with SFT and PPO. The original Acc are 0.8527, 0.753, 0.6787 at epoch [1][3][6].
    (a) SFT finetuning shows unstable recovery behavior, especially at later
    pretraining stages.
    (b) PPO finetuning consistently achieves higher and more stable accuracy
    gains across different hop distances.
    }
    
    \label{fig:sft-ppo}
\end{figure}

We next investigate whether path compression hallucinations can be alleviated
through different finetuning strategies.
Experiments are conducted on graphs with $1000$ nodes and $10$ communities,
with $p_{\text{in}} = 0.1$ and $p_{\text{out}} = 0.01$. During pretraining, the model is exposed to $50\%$ of all valid paths.
In the finetuning stage, only $20\%$ of paths are available, drawn either
from the same data distribution as pretraining (denoted as \emph{full overlap}) or from a disjoint distribution (denoted as \emph{no overlap}).

As shown in~\Cref{fig:sft-ppo}, we compare SFT and PPO finetuning at three
pretraining checkpoints (epochs $1$, $3$, and $6$).
Overall, PPO achieves more effective recovery than SFT, with the maximum
accuracy improvement reaching approximately $0.3$, whereas SFT peaks around $0.2$. Notably, SFT exhibits clear collapse behavior when applied to models that have already been strongly affected by path compression, particularly at the later pretraining stage (epoch $6$).
The choice of finetuning data further amplifies this effect.
While overlapping data can stabilize SFT in early stages, finetuning on
out-of-domain data often leads to severe performance degradation \cite{zhang2025best}. 
In contrast, at later stages, previously unseen data can provide stronger
regularization benefits than overlapping data. PPO shows substantially better generalization \cite{kirk2023understanding}. Since its optimization objective directly rewards the existence of valid edges, its performance is largely insensitive to the data overlap ratio, resulting in stable and consistent improvements across all settings. Overall, both fine-tuning and PPO are effective in recovering from path-compression hallucinations; however, PPO yields more robust and stable improvements.

\subsection{Connections to Related Phenomena}
\label{sub_sec:discussion}
Based on the underlying graph assumption, we can provide unified insights into several previously observed behaviors of large language models.
\paragraph{Reversal Curse}
The reversal curse refers to the phenomenon that when a model is trained on relations of the form ``$a$ is $b$'', it often fails to generalize to the inverse relation ``$b$ is $a$''~\cite{berglundreversal}.
From an underlying graph perspective, such relational facts correspond to directed transitions between nodes.
Although the full relation implicitly forms a cycle between $a$ and $b$, training data typically supervises only one direction (e.g. $a\rightarrow b \rightarrow c \rightarrow a$). At early stages of training, the model tends to represent the relation through multi-step paths that traverse intermediate nodes along this cycle.
However, as training progresses, path compression occurs: the multi-step reasoning path is collapsed into a single directed edge aligned with the observed supervision (e.g. $a\rightarrow b \rightarrow a$).
As a result, the inverse direction remains weakly represented as it happens at later stages, requiring substantially more training signal to emerge.
This explains why learning ``$b$ is $a$'' is empirically harder and more sample-inefficient.

\paragraph{Reasoning Distribution Bias}
The underlying graph framework explicitly models reasoning as a structured set of transitions \cite{kalai2025language}.
Under this view, the reasoning behavior of decoder-only Transformers is closely coupled with the degree distribution of the underlying graph.
Nodes with higher in- or out-degrees receive disproportionately more supervision during next-token prediction, while low-degree or peripheral nodes are underrepresented. Since real-world language data induces a highly non-uniform underlying graph, this leads to systematic reasoning biases.
In particular, transitions involving low-degree nodes or rare relational patterns are more prone to omission or hallucination, as they are weakly encoded during training.
This observation suggests that limitations in LLM reasoning are not solely due to model capacity, but also arise from structural biases inherited from the underlying reasoning graph.

\section{Conclusion}
We analyze reasoning hallucinations in decoder-only Transformers from an underlying graph perspective, modeling next-token prediction as a graph search over implicit relational structures. This formulation unifies intrinsic reasoning, which is reasoning under contextual constraints, and extrinsic reasoning, which relies on memorized underlying knowledge. We identify two core mechanisms behind hallucinations: Path Reuse, where memorized paths dominate early training, and Path Compression, where frequent multi-step paths collapse into shortcuts later in training. Our results highlight fundamental limitations of next-token prediction for faithful multi-step reasoning and motivate future work on structurally grounded training objectives.

\section*{Impact Statement}
This work provides a mechanistic analysis of reasoning hallucinations in decoder-only Transformers by modeling next-token prediction as a graph search over implicit relational structures. Our findings can help diagnose unfaithful reasoning and inform the design of more reliable evaluation and training methods for large language models. Potential risks include misuse of these insights to elicit convincing but unsupported reasoning, or overestimating their ability to prevent hallucinations. We emphasize that our contribution is explanatory rather than a complete solution, and should be combined with existing practices in real-world deployments.

\bibliography{example_paper}
\bibliographystyle{icml2026}

\newpage
\appendix
\onecolumn

\section{Evaluation Metrics}
\label{app_sec:metrics}
\paragraph{Notation.}
Let $G = (V, E)$ denote the backbone graph with $|V| = N$.
Each conditional reasoning query is defined as
\begin{equation}
q = (s, t, \mathcal{C}),
\end{equation}
where $s, t \in V$ are the source and target nodes, and $\mathcal{C}$ denotes a set of conditional constraints.
Given a query $q$, the model predicts a path
\begin{equation}
\hat{P}_{s \rightarrow t} = (v_0 = s, v_1, \ldots, v_k = t).
\end{equation}

We define an indicator function for path validity in the backbone graph:
\begin{equation}
\mathbb{I}_{\text{path}}(\hat{P}, G) =
\begin{cases}
1, & \text{if } (v_i, v_{i+1}) \in E,\ \forall i, \\
0, & \text{otherwise}.
\end{cases}
\end{equation}

Similarly, we define an indicator for condition satisfaction:
\begin{equation}
\mathbb{I}_{\text{cond}}(\hat{P}, \mathcal{C}) =
\begin{cases}
1, & \text{if } \hat{P} \text{ satisfies } \mathcal{C}, \\
0, & \text{otherwise}.
\end{cases}
\end{equation}

\paragraph{Local Accuracy.}
Local accuracy measures whether the predicted path both exists in the backbone graph and strictly satisfies the given conditions:
\begin{equation}
\mathrm{Acc}_{\mathrm{Local}}
=
\frac{1}{|\mathcal{Q}|}
\sum_{q \in \mathcal{Q}}
\mathbb{I}_{\text{path}}(\hat{P}_q, G)
\cdot
\mathbb{I}_{\text{cond}}(\hat{P}_q, \mathcal{C}_q),
\end{equation}
where $\mathcal{Q}$ denotes the set of evaluated queries.

\paragraph{Exist Accuracy.}
Exist accuracy relaxes the conditional constraint and only requires the predicted path to exist in the backbone graph:
\begin{equation}
\mathrm{Acc}_{\mathrm{Exist}}
=
\frac{1}{|\mathcal{Q}|}
\sum_{q \in \mathcal{Q}}
\mathbb{I}_{\text{path}}(\hat{P}_q, G).
\end{equation}

\paragraph{Global Accuracy.}
Global accuracy evaluates reasoning performance over all reachable node pairs in the backbone graph, independent of the sampled conditions:
\begin{equation}
\mathrm{Acc}_{\mathrm{Global}}
=
\frac{1}{|\mathcal{V}_{\mathrm{reach}}|}
\sum_{(u,v) \in \mathcal{V}_{\mathrm{reach}}}
\mathbb{I}_{\text{path}}(\hat{P}_{u \rightarrow v}, G),
\end{equation}
where
\begin{equation}
\mathcal{V}_{\mathrm{reach}} =
\{(u,v) \mid u \neq v,\ \exists\ \text{a path from } u \text{ to } v \text{ in } G\}.
\end{equation}
\section{A Simplified Model of Path Compression Phenomenon}
\label{app:skip-explain}

In this section, we provide a simplified mathematical model to explain the skipping phenomenon observed during inference. We first demonstrate the optimal substructure property of shortest paths, which justifies modeling the generative process as a sequence of localized transitions.

\begin{proposition}[Substructure Property of Shortest Paths]
    Let $\pi^* = (v_1, \dots, v_L)$ be a shortest path in a graph. For any intermediate node $v_k$ where $1 \le k < L$, the suffix $(v_k, \dots, v_L)$ remains a strictly optimal shortest path from $v_k$ to $v_L$.
\end{proposition}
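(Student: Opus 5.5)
The plan is the classical cut-and-paste (exchange) argument, by contradiction. I work in the setting of the paper's definitions: $G=(V,E)$ is a directed graph with unit edge lengths, and the length of a path $(v_1,\dots,v_L)$ is its number of edges, $L-1$. Since $\pi^*$ is a shortest path from $v_1$ to $v_L$, we have $\mathrm{dist}(v_1,v_L)=L-1$. Fix $k$ with $1\le k<L$. The suffix $(v_k,\dots,v_L)$ is automatically a valid reasoning path in the sense of the earlier definition, because each consecutive pair $(v_i,v_{i+1})$ is already an edge of $\pi^*$. Its length is $L-k$. It therefore suffices to show that no $v_k\to v_L$ path is strictly shorter than $L-k$.

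Suppose, for contradiction, that some path $\sigma=(v_k=w_0,w_1,\dots,w_m=v_L)$ has $m<L-k$. Concatenate the prefix $(v_1,\dots,v_k)$ with $\sigma$ to get the walk $(v_1,\dots,v_k,w_1,\dots,w_m)$. Every consecutive pair in this walk is an edge of $E$, so it is a valid walk from $v_1$ to $v_L$. Its length is $(k-1)+m<(k-1)+(L-k)=L-1$, which contradicts the minimality of $\pi^*$.

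The only technical point is that the concatenation may revisit a node, so it is a walk rather than a simple path. I handle this with the standard shortcut step: whenever a node repeats, delete the closed sub-walk between its two occurrences. Each deletion strictly shortens the walk and preserves the endpoints, so the process terminates in a simple path from $v_1$ to $v_L$ of length at most $(k-1)+m<L-1$. The contradiction survives. Equivalently, I can note that $\mathrm{dist}$ is defined as the minimum over walks and obeys the triangle inequality $\mathrm{dist}(v_1,v_L)\le \mathrm{dist}(v_1,v_k)+\mathrm{dist}(v_k,v_L)$.

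Finally, the word ``strictly'' needs interpreting. I read ``strictly optimal'' as saying that the suffix attains $\mathrm{dist}(v_k,v_L)=L-k$ exactly; shortest paths need not be unique, so no stronger claim is available. The same argument applied to the prefix gives $\mathrm{dist}(v_1,v_k)=k-1$, and hence $\mathrm{dist}(v_1,v_k)+\mathrm{dist}(v_k,v_L)=\mathrm{dist}(v_1,v_L)$. This places $v_k$ in the shortest-path layer $V_{k-1}$ from \Cref{sub_sec:max_sets}, which is the consistency with the layered decomposition that justifies treating generation as a sequence of localized transitions. I do not expect any real obstacle beyond the walk-versus-path bookkeeping above.
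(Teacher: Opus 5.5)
Your proof is correct and follows the paper's own argument. Both assume a strictly shorter $v_k \to v_L$ path, splice it onto the prefix $(v_1,\dots,v_k)$, and contradict the minimality of $\pi^*$. Your two additions are sound refinements the paper leaves implicit: the walk-versus-path bookkeeping (which the paper's definition of a reasoning path, not requiring distinct nodes, already makes unnecessary), and your reading of ``strictly optimal'' as attaining $\mathrm{dist}(v_k,v_L)=L-k$ rather than uniqueness.
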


\begin{proof}
    Assume, for the sake of contradiction, that there exists an alternative path $\pi'$ from $v_k$ to $v_L$ such that the length of $\pi'$ is strictly less than the length of the suffix $(v_k, \dots, v_L)$. By concatenating the prefix $(v_1, \dots, v_k)$ with $\pi'$, we would construct a valid path from $v_1$ to $v_L$ whose total length is strictly shorter than $\pi^*$. This contradicts the premise that $\pi^*$ is a shortest path. Therefore, the optimal suffix is independent of the prefix, establishing that the optimal next hop depends solely on the current node $v_k$ and the target node.
\end{proof}

Based on this optimal substructure, we can formulate the autoregressive generation of the model, parameterized by $\theta$, as predicting the next optimal step $P_\theta(v_{k+1} \mid v_k)$, effectively discarding the sequence history. 

Let $T$ denote the true one-step random walk transition matrix of the underlying graph, where $(T)_{xy} = \frac{1}{\text{outdeg}(x)}$ if an edge $x \to y$ exists, and $0$ otherwise. Because Transformers utilize self-attention to capture multi-hop co-occurrences across a context window $K$, we assume the model's learned transition probabilities implicitly approximate a convex combination of multi-step random walks.

\begin{assumption}[Mixture of Transition Matrices]
\label{ass:mixture}
    Let $P_\theta$ be the transition probability predicted by the model. We assume $P_\theta(y\mid x)$ takes the form of a weighted mixture of $i$-step transitions:
    \begin{equation*}
        P_\theta(y\mid x) = \sum_{i=1}^K \lambda_i (T^i)_{xy}
    \end{equation*}
    where $\lambda_i \ge 0$ are learned weights representing the influence of the $i$-hop neighborhood.
\end{assumption}

\begin{proposition}[Condition for Hallucinating Shortcuts]
    Assume a context window $K=2$. Let $v$, $m$, and $u$ be nodes such that there is a 2-hop path $v \to m \to u$, but no direct edge $v \to u$. Furthermore, assume there is no 2-hop path from $v$ to $m$ (i.e., $(T^2)_{vm} = 0$). If the learned weights $\lambda_i$ satisfy:
    \begin{equation*}
        \frac{\lambda_2}{\lambda_1} > \frac{1}{\sum_{m' \in \mathcal{M}} \frac{1}{\text{outdeg}(m')}}
    \end{equation*}
    where $\mathcal{M}$ is the set of all intermediate nodes connecting $v$ to $u$ (i.e., $v \to m' \to u$), then the model will predict the skipped node with higher probability than the true adjacent node: $P_\theta(u\mid v) > P_\theta(m\mid v)$.
\end{proposition}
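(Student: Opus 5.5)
The plan is to evaluate both sides of the desired inequality explicitly under \Cref{ass:mixture} with $K=2$, so that $P_\theta(y\mid x)=\lambda_1 T_{xy}+\lambda_2 (T^2)_{xy}$, and then compare the two resulting expressions. The comparison should reduce algebraically to the stated condition on $\lambda_2/\lambda_1$. No earlier result beyond the definition of $T$ and the mixture assumption is needed. The substructure proposition only justifies the Markovian form $P_\theta(v_{k+1}\mid v_k)$ that the assumption uses.

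First I compute $P_\theta(m\mid v)$. Since $v\to m$ is an edge, $T_{vm}=1/\mathrm{outdeg}(v)$. By hypothesis $(T^2)_{vm}=0$, so
\[
P_\theta(m\mid v)=\frac{\lambda_1}{\mathrm{outdeg}(v)}.
\]
Next I compute $P_\theta(u\mid v)$. There is no edge $v\to u$, so $T_{vu}=0$. For the two-step term I expand $(T^2)_{vu}=\sum_{w}T_{vw}T_{wu}$. A summand is nonzero exactly when $v\to w\to u$, that is, when $w\in\mathcal{M}$, and each such summand equals $\frac{1}{\mathrm{outdeg}(v)}\cdot\frac{1}{\mathrm{outdeg}(w)}$. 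This gives
\[
P_\theta(u\mid v)=\frac{\lambda_2}{\mathrm{outdeg}(v)}\sum_{m'\in\mathcal{M}}\frac{1}{\mathrm{outdeg}(m')}.
\]

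Finally, I cancel the common positive factor $1/\mathrm{outdeg}(v)$; it is positive because $v$ has the out-edge to $m$. Then $P_\theta(u\mid v)>P_\theta(m\mid v)$ is equivalent to $\lambda_2 S>\lambda_1$, where $S=\sum_{m'\in\mathcal{M}}1/\mathrm{outdeg}(m')$. We have $S>0$ because $m\in\mathcal{M}$ and $\mathrm{outdeg}(m)\ge 1$. Multiplying the hypothesis by $\lambda_1 S>0$ yields exactly this inequality.

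The only delicate point is the degenerate case $\lambda_1=0$, in which the ratio in the hypothesis is undefined. I would state that the hypothesis implicitly requires $\lambda_1>0$. I would also remark that if $\lambda_1=0<\lambda_2$, the conclusion holds trivially, since then $P_\theta(m\mid v)=0<P_\theta(u\mid v)$. Beyond this edge case, I expect no real obstacle: the argument is a direct calculation once the sum defining $(T^2)_{vu}$ is correctly restricted to the set $\mathcal{M}$.
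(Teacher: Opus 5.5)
Your proposal is correct and follows the paper's proof essentially line by line: you compute $P_\theta(m\mid v)=\lambda_1/\mathrm{outdeg}(v)$ using $(T^2)_{vm}=0$, expand $(T^2)_{vu}$ as a sum over $\mathcal{M}$ to get $P_\theta(u\mid v)$, and compare the two. Your extra checks are details the paper leaves unstated but that do not change the argument: positivity of $\mathrm{outdeg}(v)$, positivity of $S$ because $m\in\mathcal{M}$, and the implicit requirement $\lambda_1>0$ for the ratio to make sense.
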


\begin{proof}
    By Assumption \ref{ass:mixture} with $K=2$, and the assumption that $(T^2)_{vm} = 0$, the model's predicted probability for the true 1-hop neighbor $m$ is:
    \begin{equation*}
        P_\theta(m\mid v) = \lambda_1 (T)_{vm} + \lambda_2 (T^2)_{vm} = \lambda_1 (T)_{vm} =  \frac{\lambda_1}{\text{outdeg}(v)}.
    \end{equation*}
    Similarly, since there is no direct edge $v \to u$, the model's predicted probability for the 2-hop target $u$ is:
    \begin{equation*}
        P_\theta(u\mid v) = \lambda_1 (T)_{vu} + \lambda_2 (T^2)_{vu}= \lambda_2 (T^2)_{vu} = \lambda_2\sum_{m' \in \mathcal{M}} T_{vm'} T_{m'u} = \frac{\lambda_2}{\text{outdeg}(v)}\sum_{m' \in \mathcal{M}} \frac{1}{\text{outdeg}(m')}.
    \end{equation*}
    Therefore, when
    \begin{equation*}
        \frac{\lambda_2}{\lambda_1} > \frac{1}{\sum_{m' \in \mathcal{M}} \frac{1}{\text{outdeg}(m')}},
    \end{equation*}
    we have $P_\theta(u\mid v) > P_\theta(m\mid v)$. 
    \end{proof}

\begin{proposition}[Generalized Condition for Shortcut Hallucination]
    Let the context window be $K \geq 2$. Consider a source node $v$ and a target node $u$ such that the shortest path distance from $v$ to $u$ is $d$, where $2 \leq d \leq K$. Let $m$ be a true 1-hop neighbor of $v$ on the shortest path.
    
    Assume there are no cycles of length $\leq K$ returning to $m$ from $v$ (i.e., $(T^i)_{vm} = 0$ for $2 \leq i \leq K$). Let $\Pi_d(v, u)$ denote the set of all valid $d$-hop paths from $v$ to $u$. For any path $\pi \in \Pi_d(v, u)$, let $\mathcal{I}(\pi)$ be the set of its intermediate (interior) nodes.
    
    If the learned mixture coefficients satisfy
    \begin{equation*}
        \frac{\lambda_d}{\lambda_1} > \frac{1}{\sum_{\pi \in \Pi_d(v, u)} \prod_{x \in \mathcal{I}(\pi)} \frac{1}{\text{outdeg}(x)}},
    \end{equation*}
    then the model will predict the skipped node $u$ with higher probability than the true adjacent node $m$: $P_\theta(u \mid v) > P_\theta(m \mid v)$.
\end{proposition}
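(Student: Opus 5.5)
The plan is to follow the template of the $K=2$ proposition: compute both predicted probabilities under Assumption~\ref{ass:mixture} and compare them. I expect every higher-order term either to vanish or to be nonnegative, so it can be dropped to get a lower bound.

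\textbf{Step 1: the true neighbor $m$.} Since $m$ is an out-neighbor of $v$, we have $(T)_{vm} = 1/\text{outdeg}(v)$. By hypothesis $(T^i)_{vm}=0$ for $2\le i\le K$. Assumption~\ref{ass:mixture} then gives exactly
\begin{equation*}
P_\theta(m\mid v) = \frac{\lambda_1}{\text{outdeg}(v)}.
\end{equation*}

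\textbf{Step 2: the skipped node $u$.} Write
\begin{equation*}
P_\theta(u\mid v)=\sum_{i=1}^{K}\lambda_i (T^i)_{vu}.
\end{equation*}
Since $\text{dist}(v,u)=d$, there is no walk of length $i<d$ from $v$ to $u$, so $(T^i)_{vu}=0$ for $i<d$; in particular there is no direct edge. For $i>d$, all terms are nonnegative because $\lambda_i\ge 0$ and $T$ has nonnegative entries. Since $d\le K$, the $i=d$ term is present in the sum, and we obtain
\begin{equation*}
P_\theta(u\mid v)\ \ge\ \lambda_d (T^d)_{vu}.
\end{equation*}

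\textbf{Step 3: evaluating $(T^d)_{vu}$.} Expand the matrix power as a sum over walks $v=x_0\to x_1\to\cdots\to x_d=u$ of $\prod_{j=0}^{d-1} T_{x_j x_{j+1}}$.

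The key observation is that every walk of length $d$ from $v$ to $u$ is a simple path. If a vertex repeated, excising the cycle between the two occurrences would give a shorter walk from $v$ to $u$, contradicting $\text{dist}(v,u)=d$. Hence these walks are exactly the elements of $\Pi_d(v,u)$, the shortest paths; this is in the spirit of the substructure proposition above.

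For each $\pi\in\Pi_d(v,u)$ the weight is $\frac{1}{\text{outdeg}(v)}\prod_{x\in\mathcal{I}(\pi)}\frac{1}{\text{outdeg}(x)}$. This holds because the factors come from the out-degrees of $x_0,\dots,x_{d-1}$, which are $v$ together with the $d-1$ interior nodes; the endpoint $u$ contributes no factor. Therefore
\begin{equation*}
P_\theta(u\mid v)\ \ge\ \frac{\lambda_d}{\text{outdeg}(v)}\sum_{\pi\in\Pi_d(v,u)}\prod_{x\in\mathcal{I}(\pi)}\frac{1}{\text{outdeg}(x)}.
\end{equation*}

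\textbf{Step 4: conclusion.} Multiply the hypothesized inequality on $\lambda_d/\lambda_1$ by $\lambda_1\cdot\bigl(\sum_{\pi\in\Pi_d(v,u)}\prod_{x\in\mathcal{I}(\pi)}\frac{1}{\text{outdeg}(x)}\bigr)/\text{outdeg}(v)$, which is a positive quantity. This shows the lower bound from Step 3 strictly exceeds $\lambda_1/\text{outdeg}(v)=P_\theta(m\mid v)$.

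Positivity requires $\lambda_1>0$, which is implicit in the ratio $\lambda_d/\lambda_1$, and $\Pi_d(v,u)\neq\emptyset$, which follows from $\text{dist}(v,u)=d$.

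The main obstacle is Step 3, namely the correct bookkeeping that identifies length-$d$ walks with shortest paths and attributes the degree factors to $v$ and the interior nodes. The remaining steps are direct substitutions, exactly as in the $K=2$ case.
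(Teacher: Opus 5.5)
Your proposal is correct and follows the paper's proof step for step. You evaluate $P_\theta(m\mid v)=\lambda_1/\mathrm{outdeg}(v)$ exactly, lower-bound $P_\theta(u\mid v)$ by its $i=d$ term, expand $(T^d)_{vu}$ over $\Pi_d(v,u)$, and compare; your argument that length-$d$ walks are simple shortest paths, and your positivity remarks ($\lambda_1>0$, $\Pi_d(v,u)\neq\emptyset$), supply details the paper leaves implicit.
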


\begin{proof}
    By Assumption \ref{ass:mixture} and the assumption that $(T^i)_{vm} = 0$ for $2 \leq i \leq K$, the model's predicted probability for the true 1-hop neighbor $m$ is:
    \begin{equation*}
        P_\theta(m \mid v) = \sum_{i=1}^K \lambda_i (T^i)_{vm}= \lambda_1 (T)_{vm} =  \frac{\lambda_1}{\text{outdeg}(v)}.
    \end{equation*}
    For the target node $u$, since the shortest path distance is $d$, we know that $(T^i)_{vu} = 0$ for all $1 \leq i < d$. Thus we have
    \begin{equation*}
        P_\theta(u \mid v) = \sum_{i=d}^K \lambda_i (T^i)_{vu} \geq \lambda_d (T^d)_{vu} = \frac{\lambda_d }{\text{outdeg}(v)} \sum_{\pi \in \Pi_d(v, u)} \prod_{x \in \mathcal{I}(\pi)} \frac{1}{\text{outdeg}(x)}
    \end{equation*}
    Therefore, when 
    \begin{equation*}
        \frac{\lambda_d}{\lambda_1} > \frac{1}{\sum_{\pi \in \Pi_d(v, u)} \prod_{x \in \mathcal{I}(\pi)} \frac{1}{\text{outdeg}(x)}},
    \end{equation*}
    we have 
    \begin{equation*}
        P_\theta(u \mid v) \geq  \frac{\lambda_d }{\text{outdeg}(v)} \sum_{\pi \in \Pi_d(v, u)} \prod_{x \in \mathcal{I}(\pi)} \frac{1}{\text{outdeg}(x)} > \frac{\lambda_1}{\text{outdeg}(v)} = P_\theta(m \mid v),
    \end{equation*}
    as desired.
\end{proof}
\section{Experiment setting}
\label{app_sec:experiments_setting}
Here we list our parameter settings in the experiments in~\Cref{tab:model_configs,tab:training_hparams}. All of the training is on one H200. The test is on 8 A6000 as we evaluate the Transformers' behaviors at every stage. For each graph setting, we run three different random seeds and report the mean and standard deviation of the accuracy. In addition, we report the minimum runtime observed across the three seeds. The result details are shown in~\ref{tab:exp_details}.
\begin{table}[t]
\centering
\small
\begin{tabular}{l c}
\toprule
\textbf{Hyperparameter} & \textbf{Value} \\
\midrule
Hidden size ($d_{\text{model}}$) & 512 \\
Sequence length (block size) & 128 \\
Learning rate & $5 \times 10^{-4}$ \\
Optimizer & AdamW \\
Weight decay & 0.1 \\
Learning rate scheduler & Cosine \\
Warmup ratio & 0.03 \\
Batch size (per device) & 2048 \\
Gradient accumulation steps & 1 \\
Evaluation strategy & Step-based \\
Checkpoint saving interval & One epoch \\
Precision & bfloat16 (bf16) \\
Gradient checkpointing & Enabled \\
Dynamic saving & Disabled \\
\bottomrule
\end{tabular}
\caption{Training hyperparameters used in all experiments unless otherwise specified.}
\label{tab:training_hparams}
\end{table}
\begin{table}[t]
\centering
\small
\begin{tabular}{l c c c}
\toprule
\textbf{Parameter} & \textbf{LLaMA} & \textbf{Mixtral} & \textbf{Qwen2} \\
\midrule
Hidden size ($d_{\text{model}}$) & \multicolumn{3}{c}{512} \\
FFN hidden size & $2.8 \times d_{\text{model}}$ & $2.8 \times d_{\text{model}}$ & $2.8 \times d_{\text{model}}$ \\
Number of layers & \multicolumn{3}{c}{6} \\
Attention heads & \multicolumn{3}{c}{16} \\
Key-value heads (GQA) & \multicolumn{3}{c}{8} \\
Head dimension & \multicolumn{3}{c}{64} \\
Max position embeddings & \multicolumn{3}{c}{128} \\
RoPE $\theta$ & \multicolumn{3}{c}{10,000} \\
RMSNorm $\epsilon$ & $1\times10^{-5}$ & $1\times10^{-5}$ & $1\times10^{-6}$ \\
Activation function & SiLU & SiLU & SiLU \\
KV cache during training & Disabled & Disabled & Disabled \\
\midrule
\textbf{MoE-specific} & -- & & -- \\
Number of experts & -- & 8 & -- \\
Experts per token & -- & 2 & -- \\
Routing type & -- & Top-$k$ & -- \\
Router auxiliary loss & -- & $1\times10^{-2}$ & -- \\
Router jitter & -- & 0.0 & -- \\
\bottomrule
\end{tabular}
\caption{Backbone model configurations for different Transformer architectures.}
\label{tab:model_configs}
\end{table}

\begin{table}[h]
\centering
\begin{tabular}{lcccc}
\hline
$\#\mathrm{community}\_p_{in}\_p_{out}$ & epoch-1 & epoch-7 & epoch-49 & time (ms) \\
\hline
$10\_0.3\_0.01$  & $0.992\pm0.00$ & $0.998\pm0.00$ & $0.983\pm0.00$ & 1197.10 \\
$10\_0.6\_0.01$  & $0.998\pm0.00$ & $0.999\pm0.00$ & $0.997\pm0.00$ & 1805.54 \\
$50\_0.3\_0.01$  & $0.904\pm0.02$ & $0.979\pm0.04$ & $0.888\pm0.14$ & 324.62  \\
$50\_0.6\_0.01$  & $0.967\pm0.07$ & $0.978\pm0.02$ & $0.898\pm0.03$ & 375.81  \\
$100\_0.1\_0.01$ & $0.952\pm0.01$ & $0.959\pm0.01$ & $0.851\pm0.00$ & 199.79  \\
$100\_0.1\_0.02$ & $0.962\pm0.002$ & $0.970\pm0.003$ & $0.831\pm0.011$ & 393.95  \\
\hline
\end{tabular}
\caption{Performance under different graph structures}
\label{tab:exp_details}
\end{table}
\section{Real-World Connections}
\label{app_sec:real-world}
Our assumption is that LLMs will set up the inner knowledge graph while learning the data from language sequences; therefore, we take the DocReD dataset~\cite{yao2019docred}, which has both documents and a manually extracted knowledge graph. We train a 4-layer Qwen3 model on the documents and evaluate the knowledge uncompression ratio following section 6.1. We select 500 high-frequency entity nodes (more than 4 documents include the entities) to evaluate the uncompression ratio, as shown in~\Cref{fig:doc}.The results suggest that, during real-world language training, knowledge is well organized before the model begins to overfit. The uncompression ratio is significantly higher than in the overfitting stage.

\begin{figure}
    \centering
    \includegraphics[width=0.5\linewidth]{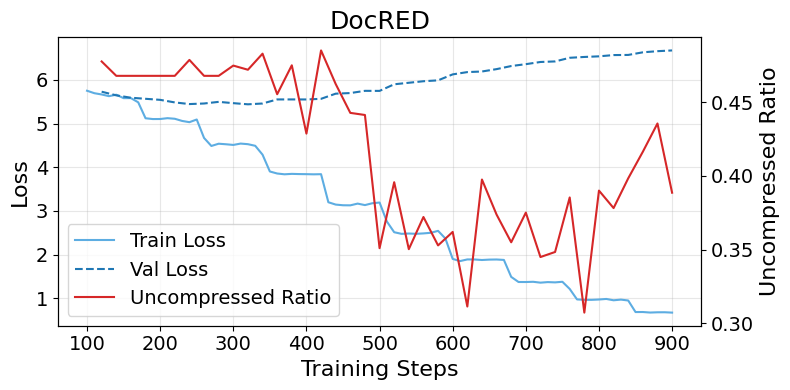}
    \caption{Path compression in knowledge graph data.}
    \label{fig:doc}
\end{figure}

Meanwhile, we find that real-world language training data can indeed be structured as community graphs. For the math and code datasets, we construct token-level graphs by filtering out low-frequency tokens (occurring fewer than 10 times). For the manually built knowledge graph from DocRED, we remove isolated nodes. We then apply the Leiden algorithm to identify community structures, as summarized in~\Cref{tab:real-world-distribution}. The results demonstrate that language data can be clearly partitioned into multiple communities, revealing structured patterns as discussed in the paper.

\begin{table}[h]
\centering
\begin{tabular}{lcccccc}
\hline
Dataset & \#Node & \#Edge & \#Community & Internal degree & External degree \\
\hline
GSM8K (Math) & 1406 & 5084 & 71 & 0.383 & 0.002 \\
APPS (Code) & 12454 & 112184 & 136 & 0.389 & 0.001 \\
DocRED (KG) & 20221 & 38180 & 770 & 0.431 & $1\times10^{-5}$ \\
\hline
\end{tabular}
\caption{Real world data analysis}
\label{tab:real-world-distribution}
\end{table}

~\section{Graph Property Effects}
\label{app_sec:graph_property}

As discussed in~\Cref{sub_sec:path_compression_mechanism}, path compression is closely tied to the structure of the underlying graph. Here, we further investigate path compression under different parameter settings of stochastic block model (SBM) graphs. All experiments are conducted with fully trained Transformer models, and the results are shown in~\Cref{fig:app_pout1,fig:app_pout2,fig:app_pout3}. The k-ratio is defined as the ratio of the number of communities to the number of nodes, i.e., $\#\text{community}/\#\text{nodes}$. We additionally examine whether Transformers benefit from increased training across these settings.

Overall, graphs with clearer community structure—characterized by smaller k-ratio, lower $p_{\text{out}}$, and higher $p_{\text{in}}$—exhibit greater training stability and are less prone to path-compression–induced hallucinations.

\begin{figure}
    \centering
    \includegraphics[width=\linewidth]{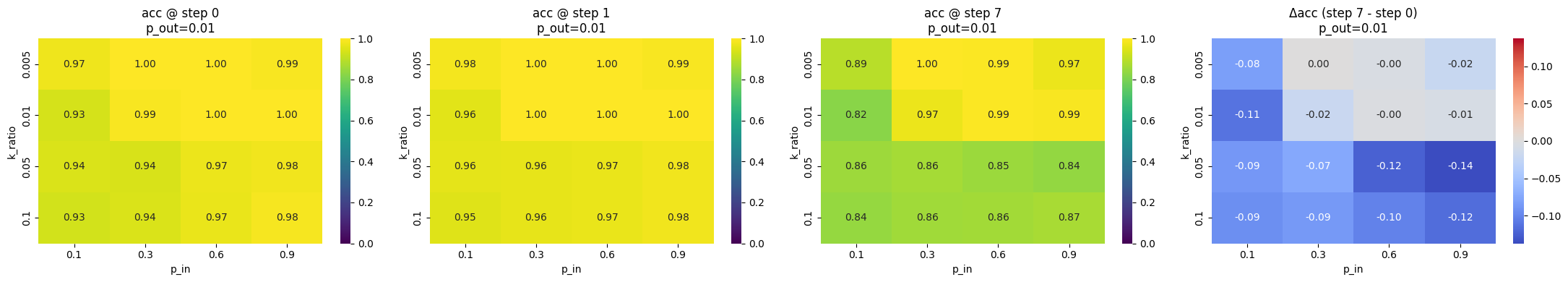}
    \caption{Underlying graphs with 0.01 $\rm p_{out}$}
    \label{fig:app_pout1}
\end{figure}
\begin{figure}
    \centering
    \includegraphics[width=\linewidth]{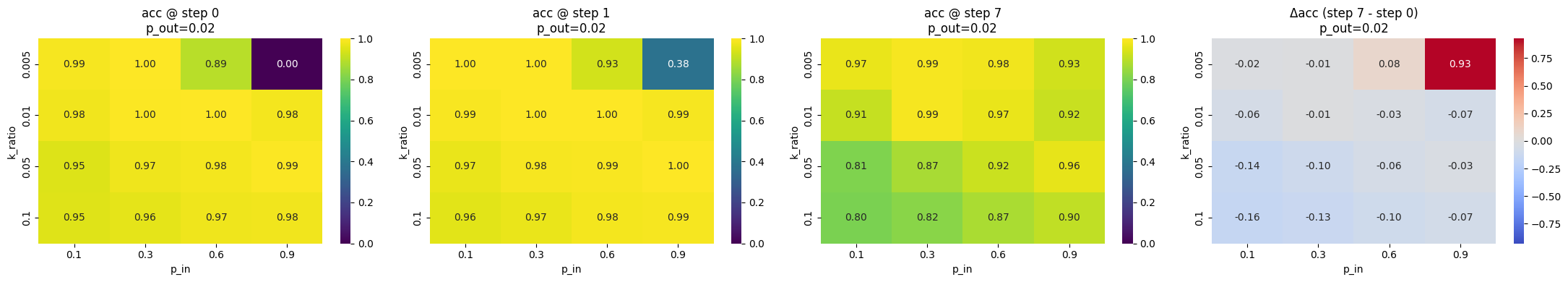}
    \caption{Underlying graphs with 0.02 $\rm p_{out}$}
    \label{fig:app_pout2}
\end{figure}
\begin{figure}
    \centering
    \includegraphics[width=\linewidth]{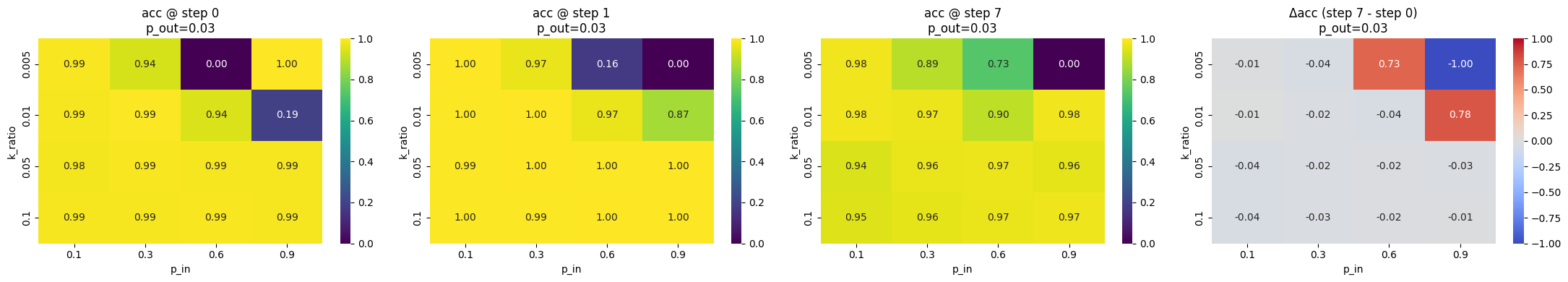}
    \caption{Underlying graphs with 0.03 $\rm p_{out}$}
    \label{fig:app_pout3}
\end{figure}




\end{document}